\def\*#1{\mathbf{#1}}
\def\~#1{\boldsymbol{#1}}
\theoremstyle{plain}
\newtheorem{theorem}{Theorem}[section]
\newtheorem{proposition}[theorem]{Proposition}
\newtheorem{lemma}[theorem]{Lemma}
\theoremstyle{definition}
\newtheorem{definition}[theorem]{Definition}
\theoremstyle{remark}
\newtheorem{remark}[theorem]{Remark}
\newtheorem{example}[theorem]{Example}
\icmltitlerunning{How to address  monotonicity for model risk management?}
\begin{document}

\twocolumn[
\icmltitle{How to Address Monotonicity for Model Risk Management?}



\icmlsetsymbol{equal}{*}

\begin{icmlauthorlist}
\icmlauthor{Dangxing Chen}{yyy}
\icmlauthor{Weicheng Ye}{yyy}
\end{icmlauthorlist}

\icmlaffiliation{yyy}{Zu Chongzhi Center for Mathematics and Computational Sciences, Duke Kunshan University, 
Kunshan, Jiangsu, China}
\icmlcorrespondingauthor{Dangxing Chen}{dangxing.chen@dukekunshan.edu.cn}

\icmlkeywords{Machine Learning, ICML}

\vskip 0.3in
]



\printAffiliationsAndNotice{}  

\begin{abstract}
In this paper, we study the problem of establishing the accountability and fairness of transparent machine learning models through monotonicity. Although there have been numerous studies on individual monotonicity, pairwise monotonicity is often overlooked in the existing literature. This paper studies transparent neural networks in the presence of three types of monotonicity: individual monotonicity, weak pairwise monotonicity, and strong pairwise monotonicity. As a means of achieving monotonicity while maintaining transparency, we propose the monotonic groves of neural additive models. As a result of empirical examples, we demonstrate that monotonicity is often violated in practice and that monotonic groves of neural additive models are transparent, accountable, and fair.
\end{abstract}

\section{Introduction}
There has been growing public concern over the misuse of artificial intelligence models in the absence of regulations, despite the success of artificial intelligence (AI) and machine learning (ML) in many fields \cite{radford2019language,he2016deep,chen2016xgboost}. The European Commission (EC) has proposed the Artificial Intelligence Act (AIA) \cite{EU2021act}, which represents a significant first step toward filling the regulatory void. Regulations regarding artificial intelligence should consider transparency, accountability, and fairness \cite{carlo2021AI,OCC2021model}.

Many efforts have been made to develop transparent ML models \cite{rudin2019stop,agarwal2021neural, yang2021gami, tsang2020does, hastie2017generalized,caruana2015intelligible, lou2013accurate}. A transparent model facilitates the explanation of how it makes decisions, therefore allowing us to easily verify conceptual soundness and fairness. 

Nevertheless, conceptual soundness and fairness are not necessarily guaranteed for ML models, even if they are transparent. Our focus in this paper is on monotonicity, one of the most important indicators. 
In recent years, monotonic machine learning models have received extensive research attention \cite{yanagisawa2022hierarchical,liu2020certified,milani2016fast,you2017deep,potharst2002classification,duivesteijn2008nearest}. These studies have led to a more reasonable and fair approach to ML. The majority of papers, however, focus on individual monotonicity, that is, on the fact that a model is monotonic with a particular feature. It was only recently pointed out that individual monotonicity is insufficient to summarize all relevant information \cite{chen2022monotonic,gupta2020multidimensional}. It is also important to consider pairwise monotonicity, a monotonicity that considers monotonicity between different features. Furthermore, most of these models are not necessarily transparent.



In this paper, pairwise monotonicity is explored in more detail, particularly in the context of transparent machine learning models. We divide pairwise monotonicity into two types: the pairwise monotonicity introduced in \cite{chen2022monotonic} is classified as weak pairwise monotonicity, and monotonic dominance discussed in \cite{gupta2020multidimensional} is classified as strong pairwise monotonicity. Time and severity are the two most common causes of pairwise monotonicity. In terms of time, recent information should often be considered more important than older information. For example, in credit scoring, if there is one past due, the credit score should be lower if the past due occurred recently. It is important to take into account such pairwise monotonicity in order to give people the opportunity to improve. It is important that all individuals have the opportunity to succeed without being solely based on their past behaviors. In terms of severity, some events are intrinsically more severe than others due to the nature of justice. A felony, for example, is more serious than a misdemeanor in criminal justice. It is important to maintain pairwise monotonicity as justice is an important component of fairness and a good society should have a system of reward and punishment that is fair. Furthermore, weak and strong pairwise monotonicity are distinguished based on whether two features can only be compared at the same magnitude. Strong pairwise monotonicity occurs when two features can be compared at any level. Justice usually dictates the making of such comparisons. 

Pairwise monotonicity is analyzed and its impact on statistical interactions is discussed. The traditional way to check additive separability should incorporate monotonicity constraints. Features with strong pairwise monotonicity and diminishing marginal effects should not be separated, even if data indicate otherwise. A new class of monotonic groves of neural additive models (MGNAMs) is presented to incorporate three types of monotonicity into transparent neural networks. We demonstrate empirically that pairwise monotonicities frequently occur in a wide range of fields, including finance, criminology, and healthcare. Overall, MGNAMs provide a transparent, accountable, and fair framework.

\section{Monotonicity}

For problem setup, assume we have $\mathcal{D} \times \mathcal{Y}$, where $\mathcal{D}$ is the dataset with $n$ samples and $m$ features and $\mathcal{Y}$ is the corresponding numerical values in regression and labels in classification. We assume the data-generating process 
\begin{align}
y = f(\*x) + \epsilon
\end{align}
for regression problems and 
\begin{align}
 y|\*x = \text{Bernoulli}(g^{-1}(f(\*x)))    
\end{align}
for binary classification problems, where $g^{-1}$ is the logistic function in this paper. For simplicity, we assume $\*x \in \mathbb{R}^m$. Then ML methods are applied to approximate $f$.

\subsection{Individual Monotonicity}

Throughout the paper, without loss of generality, we focus on the monotonic increasing functions. Suppose $\~{\alpha}$ is the list of all individual monotonic features and  $\neg \~{\alpha}$ its complement, then the input $\*x$ can be partitioned into $\*x = (\*x_{\~{\alpha}}, \*x_{\neg \~{\alpha}})$. 
Then we have the following definition. 
\begin{definition} \label{def:indi_mono}
We say $f$ is \textbf{individually monotonic} with respect to $\*x_{\~{\alpha}}$ if
\begin{align} \label{eq:mono_con1}
 & f(\*x_{\~{\alpha}}, \*x_{\neg \~{\alpha}}) \leq f(\*x'_{\~{\alpha}}, 
 \*x_{\neg \~{\alpha}}), \nonumber \\
 & \*x_{\~{\alpha}} \leq \*x'_{\~{\alpha}},  \forall \*x_{\~{\alpha}}, \*x_{\~{\alpha}}',\*x_{\neg \~{\alpha}} ,
\end{align}
where $\*x_{\~{\alpha}} \leq \*x_{\~{\alpha}}'$ denotes the inequality for all entries, i.e., $x_{\alpha_i} \leq x_{\alpha_i}', \forall{i}$.
\end{definition}
Here is an example of individual monotonicity. 
\begin{example} \label{eg:indi_mono}
In credit scoring, the probability of default should increase  as the number of past due increases. 
\end{example}
For a differentiable function $f$,  individual monotonicity with respect to $\*x_{\~{\alpha}}$ can be verified if 
\begin{align} \label{eq:mono_verify1}
\min_{\*x, i} \frac{\partial f (\*x)}{\partial x_{\alpha_i}} \geq 0.    
\end{align}

\subsection{Pairwise Monotonicity}

There are some features that are intrinsically more important than others in practice. Analog to \eqref{eq:mono_con1}, we partition $\*x = (x_{\beta},x_{\gamma},\*x_{\neg})$. Without loss of generality, we assume $x_{\beta}$ is more important than $x_{\gamma}$. As a result of multiple features encountering pairwise monotonicity, we record them in two lists $\*u$ and $\*v$ such that $u_i$ is more important than $v_i$. Lastly, we require all features with pairwise monotonicity also satisfy individual monotonicity.

 \subsubsection{Weak Pairwise Monotonicity}

 We classify the pairwise monotonicity introduced in \cite{chen2022monotonic} as the weak pairwise monotonicity. The definition is given as follows. 

\begin{definition}\label{def:weak_mono}
We say $f$ is \textbf{weakly monotonic} with respect to $x_{\beta}$ over $x_{\gamma}$ if 
\begin{align}\label{eq:mono_con2}
    & f(x_{\beta},x_{\gamma}+c,\*x_{\neg}) \leq f(x_{\beta}+c,x_{\gamma},\*x_{\neg}), \nonumber \\
    & \forall x_{\beta}, x_{\gamma} \text{ s.t. } x_{\beta}=x_{\gamma}, \forall \*x_{\neg}, \forall c \in \mathbb{R}^+. 
\end{align}
\end{definition}
We give an example of weak pairwise monotonicity below. 
\begin{example} \label{eg:weak_mono}
Functions should be weakly monotonic with respect to features containing current information over features containing past information. Following Example~\ref{eg:indi_mono}, let $x_{\beta}$ and $ x_{\gamma}$ count the number of past dues within two years and two years ago, then the probability of default is weakly monotonic with respect to $x_{\beta}$ over $x_{\gamma}$. 
\end{example}
Such monotonicity is considered weak due to the condition of $x_{\beta} = x_{\gamma}$. Using this condition ensures that the effects of features on the function are compared at the same magnitude, and can therefore be viewed as a more general definition. 

Suppose $f$ is differentiable and is weakly monotonic with respect to $u_i$ over $v_i$ for all i in lists $\*u$ and $\*v$, then the weak pairwise monotonicity can be verifed as
\begin{align} \label{eq:mono_verify2}
\min_{\widetilde{\*x}, i} \left( \frac{\partial f}{\partial x_{u_i}}(\widetilde{\*x}) - \frac{\partial f}{\partial x_{v_i}}(\widetilde{\*x}) \right) \geq 0.    
\end{align}
where $\widetilde{x}_{u_i} = \widetilde{x}_{v_i}$ in $\widetilde{\*x}$.

\subsubsection{Strong Pairwise Monotonicity}

In addition to the weak pairwise monotonicity, there exists a stronger condition of pairwise monotonicity. We classify the monotonic dominance introduced in \cite{gupta2020multidimensional} as the strong pairwise monotonicity. 
\begin{definition}\label{def:str_mono}
We say $f$ is \textbf{strongly monotonic} with respect to $x_{\beta}$ over $x_{\gamma}$ if 
\begin{align}\label{eq:mono_con3}
    & f(x_{\beta},x_{\gamma}+c,\*x_{\neg}) \leq f(x_{\beta}+c,x_{\gamma},\*x_{\neg}), \nonumber \\
    & \forall x_{\beta}, x_{\gamma}, \*x_{\neg}, \forall c \in \mathbb{R}^+. 
\end{align}
\end{definition}

The difference between strong/weak monotonicity is whether the condition $x_{\beta} = x_{\gamma}$ is needed. Strong monotonicity implies the impacts of increments of some features are more important than others at any point. Note that the features in Example~\ref{eg:weak_mono} are only weakly pairwise monotonic, not strongly pairwise monotonic. Adding more past dues to the credit score will have a different impact based on the number of past dues. Thus, current and past features cannot be directly compared, unless they are of equal magnitude. We provide an example of strong pairwise monotonicity below. 
\begin{example} \label{eg:stro_mono}
In criminology, an additional felony is always considered more serious than an additional misdemeanor. Therefore, the probability of recidivism should be strongly monotonic with respect to felonies over misdemeanors. 
\end{example}
Clearly, we have the following Lemma.

\begin{lemma} \label{lem:str_imply_weak}
If $f$ is strongly monotonic with respect to $x_{\beta}$ over $x_{\gamma}$, then $f$ is also weakly monotonic with respect to $x_{\beta}$ over $x_{\gamma}$. 
\end{lemma}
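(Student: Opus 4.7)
The plan is to observe that weak pairwise monotonicity is literally a restriction of the strong pairwise monotonicity inequality to a subset of inputs, namely those with $x_{\beta}=x_{\gamma}$. So the implication will follow by specialization: no real argument is needed beyond matching the quantifiers in Definitions~\ref{def:weak_mono} and \ref{def:str_mono}.

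Concretely, I would start from the hypothesis that $f$ is strongly monotonic with respect to $x_{\beta}$ over $x_{\gamma}$, which gives the inequality
\begin{align*}
f(x_{\beta},x_{\gamma}+c,\*x_{\neg}) \leq f(x_{\beta}+c,x_{\gamma},\*x_{\neg})
\end{align*}
for every $x_{\beta}, x_{\gamma}, \*x_{\neg}$ and every $c\in\mathbb{R}^+$. I would then restrict attention to inputs that additionally satisfy $x_{\beta} = x_{\gamma}$; since this is a strictly smaller collection of inputs, the same inequality persists on it. That is exactly the defining condition in Definition~\ref{def:weak_mono}, so $f$ is weakly monotonic with respect to $x_{\beta}$ over $x_{\gamma}$.

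There is essentially no obstacle here; the only thing to be careful about is to quantify the variables in the correct order and to note that the auxiliary argument $\*x_{\neg}$ and the increment $c\in\mathbb{R}^+$ play identical roles in the two definitions, so no adjustment to them is required. The entire content of the lemma is that the quantifier ``$\forall x_\beta,x_\gamma$'' in the strong definition is logically stronger than ``$\forall x_\beta,x_\gamma$ with $x_\beta=x_\gamma$'' in the weak definition.
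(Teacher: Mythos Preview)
Your proposal is correct and matches the paper's own treatment: the paper simply prefaces this lemma with ``Clearly'' and does not include a proof in the appendix, precisely because the weak condition in Definition~\ref{def:weak_mono} is the strong condition in Definition~\ref{def:str_mono} restricted to the diagonal $x_{\beta}=x_{\gamma}$. Your explicit unpacking of the quantifiers is the right (and only) thing to say.
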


For a differentiable function $f$, suppose $f$ is strongly monotonic with respect to $u_i$ over $v_i$ for all i in lists $\*u$ and $\*v$, then the strong pairwise monotonicity can be verifed as
\begin{align} \label{eq:mono_verify3}
    \min_{\*x,i} \left( \frac{\partial f}{\partial x_{u_i}}(\*x) - \frac{\partial f}{\partial x_{v_i}} (\*x) \right) \geq 0.
\end{align}

Strong pairwise monotonicity is transitive and we provide the following Lemma, where proof is provided in Appendix~\ref{sec:appen_proof}. 
\begin{lemma}\label{lem:str_tran}
If $f$ is strongly monotonic with respect to $x_{\beta}$ over $x_{\gamma}$ and $x_{\gamma}$ over $x_{\delta}$, then $f$ is strongly monotonic with respect to $x_{\beta}$ over $x_{\delta}$. 
\end{lemma}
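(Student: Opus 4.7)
The plan is a direct two-step chaining argument using the defining inequality of strong pairwise monotonicity twice, applied to shift a quantum of $c$ from $x_\delta$ through $x_\gamma$ to $x_\beta$.

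First I would fix an arbitrary point $(x_\beta, x_\gamma, x_\delta, \*x_\neg)$ and $c \in \mathbb{R}^+$, and write down the target inequality $f(x_\beta, x_\gamma, x_\delta + c, \*x_\neg) \leq f(x_\beta + c, x_\gamma, x_\delta, \*x_\neg)$. The key observation is that strong pairwise monotonicity as given in Definition~\ref{def:str_mono} imposes no constraint relating the two pivoted coordinates (unlike weak pairwise monotonicity, which requires $x_\beta = x_\gamma$), so I can instantiate each hypothesis at an arbitrary state of the other coordinates.

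Next I would apply the hypothesis that $f$ is strongly monotonic in $x_\gamma$ over $x_\delta$ at the point where the $\beta$-coordinate is held fixed at $x_\beta$. This yields the intermediate inequality
\begin{align*}
f(x_\beta, x_\gamma, x_\delta + c, \*x_\neg) \leq f(x_\beta, x_\gamma + c, x_\delta, \*x_\neg).
\end{align*}
Then I would apply the hypothesis that $f$ is strongly monotonic in $x_\beta$ over $x_\gamma$ at the point where the $\delta$-coordinate is held fixed at $x_\delta$ and the $\gamma$-coordinate has already been shifted up by $c$. This yields
\begin{align*}
f(x_\beta, x_\gamma + c, x_\delta, \*x_\neg) \leq f(x_\beta + c, x_\gamma, x_\delta, \*x_\neg).
\end{align*}
Transitivity of $\leq$ on the reals then gives the desired inequality, and since $(x_\beta, x_\gamma, x_\delta, \*x_\neg)$ and $c$ were arbitrary, the conclusion holds in the sense of Definition~\ref{def:str_mono}.

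There is essentially no obstacle here: the proof is a one-line telescoping argument that works precisely because the strong form of the definition allows the comparison to be carried out at any level, so no compatibility condition between the two hypothesis applications needs to be checked. It is worth remarking (and I would note this in the write-up) that the analogous statement for \emph{weak} pairwise monotonicity does not follow by this template, since the constraint $x_\beta = x_\gamma$ in one application and $x_\gamma = x_\delta$ in the other cannot in general be imposed simultaneously after the intermediate $+c$ shift — this is in fact the structural reason the lemma is stated for the strong version only.
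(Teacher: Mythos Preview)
Your proposal is correct and is essentially identical to the paper's proof: the paper also partitions $\*x = (x_\beta, x_\gamma, x_\delta, \*x_{\neg})$, writes down the two inequalities from Definition~\ref{def:str_mono} (one for $\beta$ over $\gamma$, one for $\gamma$ over $\delta$), and chains them via transitivity of $\leq$. Your side remark about why the argument fails for weak pairwise monotonicity is apt and consistent with the paper, which proves the weak analogue (Lemma~\ref{lem:weak_tran}) only under the extra GAM assumption.
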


\section{Statistical Interactions}

The study of transparent machine learning models has become increasingly popular in order to improve explanation and compliance with regulatory requirements. As a general rule, we should avoid interactions between features if they do not exist in order to maintain transparency in our models. One popular class of transparent models is generalized additive models (GAMs) \cite{hastie2017generalized} of the form
\begin{align} \label{eq:GAMs}
f(\*x) = \alpha + \sum_{i=1}^m f_i(x_i). 
\end{align}
GAMs are transparent in that statistical interactions are not included. 
\cite{agarwal2021neural,caruana2015intelligible} have shown that combination of GAMs with ML models achieved high accuracy for many datasets. In this section, we discuss whether we could incorporate three types of monotonicity into GAMs. 


\subsection{Individual and Weak Pairwise Monotonicity for GAMs}

In GAMs, individual and weak pairwise monotonicity can be easily enforced. Assume that $f$ follows the GAM  \eqref{eq:GAMs} of the form and is differentiable. If $f$ is individually monotonic with respect to $x_{\alpha}$, then we need 
\begin{align}
f_{\alpha}'(x) \geq 0, \ \forall x \in \mathbb{R}.  
\end{align}
Similarly, if $f$ is weakly monotonic with respect to $x_{\beta}$ over $x_{\gamma}$, then the weak pairwise monotonicity requires that
\begin{align}
f_{\beta}'(x) \geq f_{\gamma}'(x), \forall x \in \mathbb{R}.
\end{align}
Constraints such as these can be easily implemented \cite{chen2022monotonic}. Furthermore, without statistical interactions, weak pairwise monotonicity is also transitive, as illustrated in the following Lemma with proof in Appendix~\ref{sec:appen_proof}. 
\begin{lemma}\label{lem:weak_tran}
If $f$ follows the GAM \eqref{eq:GAMs}, $f$ is weakly monotonic with respect to $x_{\beta}$ over $x_{\gamma}$ and $x_{\gamma}$ over $x_{\delta}$, then $f$ is weakly monotonic with respect to $x_{\beta}$ over $x_{\delta}$. 
\end{lemma}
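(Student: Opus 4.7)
The plan is to exploit the additive structure of \eqref{eq:GAMs} to reduce each weak pairwise monotonicity hypothesis to a univariate inequality between two component functions, and then to chain them by transitivity of $\leq$ on $\mathbb{R}$.

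First I would plug $f(\*x) = \alpha + \sum_{i=1}^m f_i(x_i)$ into Definition~\ref{def:weak_mono}. Setting $x_{\beta} = x_{\gamma} = t$, every component $f_i$ with $i \notin \{\beta,\gamma\}$ as well as the constant $\alpha$ cancels between the two sides of \eqref{eq:mono_con2}, leaving the univariate condition
\begin{align*}
f_{\gamma}(t+c) - f_{\gamma}(t) \leq f_{\beta}(t+c) - f_{\beta}(t)
\end{align*}
for all $t \in \mathbb{R}$ and $c > 0$. The hidden coordinates $\*x_{\neg}$ have disappeared and the matching constraint has collapsed onto a single real parameter, so weak monotonicity in the GAM case is purely a statement about the component functions.

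Applying the same reduction to the second hypothesis would give $f_{\delta}(t+c) - f_{\delta}(t) \leq f_{\gamma}(t+c) - f_{\gamma}(t)$ for all $t$ and $c > 0$. Chaining these two univariate inequalities immediately yields $f_{\delta}(t+c) - f_{\delta}(t) \leq f_{\beta}(t+c) - f_{\beta}(t)$, and then running the reduction in reverse (again using the GAM form to reintroduce the cancelled $f_i$'s and $\*x_{\neg}$) recovers precisely the weak monotonicity of $f$ with respect to $x_{\beta}$ over $x_{\delta}$.

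I do not foresee a real obstacle: once additivity isolates each pair, the lemma reduces to transitivity of $\leq$. The only point worth emphasizing is \emph{why} the GAM hypothesis is needed at all — weak monotonicity only compares values on the diagonal $x_{\beta} = x_{\gamma}$ (respectively $x_{\gamma} = x_{\delta}$), and for a general non-additive $f$ these two diagonals cannot be visited simultaneously by a single point, so naive chaining would fail. Additivity forces both hypotheses to speak about the same univariate $t$-axis, which is exactly what lets transitivity go through and explains the contrast with the strong analogue in Lemma~\ref{lem:str_tran}, where no structural assumption on $f$ is required.
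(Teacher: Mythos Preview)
Your proof is correct. It differs from the paper's only in that the paper works with the derivative characterization derived just before the lemma: under the GAM form (and differentiability) weak monotonicity of $x_{\beta}$ over $x_{\gamma}$ becomes $f_{\beta}'(x)\geq f_{\gamma}'(x)$ for all $x$, so the two hypotheses give $f_{\beta}'\geq f_{\gamma}'\geq f_{\delta}'$ and transitivity of $\leq$ finishes immediately. You instead stay at the level of Definition~\ref{def:weak_mono} and reduce to the increment inequality $f_{\gamma}(t+c)-f_{\gamma}(t)\leq f_{\beta}(t+c)-f_{\beta}(t)$, then chain. Both routes isolate the same univariate comparison via additivity and then invoke transitivity; yours is marginally more elementary since it does not need the component functions to be differentiable, while the paper's version is shorter given that the derivative condition has already been stated. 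Your closing remark on \emph{why} the GAM hypothesis is essential (the two diagonals $x_{\beta}=x_{\gamma}$ and $x_{\gamma}=x_{\delta}$ cannot be visited simultaneously for general $f$) is a nice point the paper does not spell out.
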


\subsection{Additive Separability}

Statistical interactions can be determined by checking additive separability. 
For simplicity, suppose there are two groups: $\*x$ can be split into two components $\*x_U$ and $\*x_V$, with $U \cup V = D$ and $U \cap V = \emptyset$, where $D = \{1, \dots, m\}$. Extending it to multiple groups is straightforward. 
\begin{definition}\label{def:sep_add}
We say a function $f$ with $D$ is strictly additive separable for $U$ and $V$ if 
\begin{align}
f(\*x)  = g(\*x_U) + h(\*x_V)
\end{align}
for some functions $g$ and $h$, $U \cup V = D$, and $U \cap V = \emptyset$. 
\end{definition}

Recently, statistical interactions have been studied extensively in the existing literature \cite{sorokina2008detecting,tsang2018neural,tsang2020does}. Roughly speaking, we wish to know whether there are interactions between groups $U$ and $V$. As implied from the name, the conclusion is often drawn according to whether such interactions are statistically significant. There are many different rules to check statistical significance, as a simple example, we might consider a threshold $\epsilon$ and check whether the accuracy deteriorates if no interactions are assumed. 
\begin{align} \label{eq:stat_inter}
  &\textbf{Verify additive separability:} \nonumber \\
  &|\text{Acc}(f(\*x))-\text{Acc}(g(\*x_U)+h(\*x_V))| < \epsilon.
\end{align}
If the criteria are satisfied, it seems reasonable to conclude that there are no interactions between $U$ and $V$. When it comes to GAMs, if a GAM achieves similar accuracy as the black-box ML model, we may conclude that no interaction is necessary.

\subsection{Additive Separability in the Presence of Monotonicity}

In the context that monotonicity is required, we should add the monotonicity into the requirement of additive separability. That motivates us to modify the rule of Equation~\ref{eq:stat_inter}.
\begin{align} \label{eq:stat_inter_mono}
  &\textbf{Verify additive separability with monotonicity:} \nonumber \\
  &|\text{Acc}(f(\*x))-\text{Acc}(g(\*x_U)+h(\*x_V))| < \epsilon, \\
  & \text{$f$ and $g+h$ have required monotonicity.} \nonumber
\end{align}

For statistical interactions with monotonicity, monotonicity constraints for $g+h$ are essential, since we may not have sufficient data for statistically significant results. In spite of this, neglecting such a statistical interaction may have catastrophic consequences. 
To illustrate our idea, consider the following example of credit scoring. 
\begin{example}
Suppose $\*x = (x_{\beta},x_{\gamma})$ where $x_{\beta}$ counts the number of past dues of more than 60 days and $x_{\gamma}$ counts the number of past dues between 30 and 59 days. Assume $f$ calculates the probability of defaults. Clearly, $f$ should be strongly monotonic with respect to $x_{\beta}$ over $x_{\gamma}$. For simplicity, consider the values of $f$ in the region where $x_{\beta}+x_{\gamma} \leq 2$. Suppose the true function $f$ and an additive approximation $\widetilde{f} = f_1 + f_2$ are given in Table~\ref{tab:stat_inter_fail}. If there are no data for $\*x = (1,1)$, then $\widetilde{f}$ exactly fits $f$ in all training data. According to the criteria \eqref{eq:stat_inter}, $x_{\beta}$ and $x_{\gamma}$ can be well separated. However, $\widetilde{f}$ doesn't have strong pairwise monotonicity and $\widetilde{f}(1,1) > \widetilde{f}(2,0)$ causes algorithmic unfairness. Furthermore, such rules could encourage people with $\*x = (1,1)$ to wait for an additional month to pay back to change to $\*x = (2,0)$ in order to obtain a lower probability of default, and therefore higher credit score. Even worse, ML models might not recognize this from data in the long run, as people would intentionally avoid the state $\*x = (1,1)$. Data does not reveal such a problem, thus it must be considered in advance. 


\begin{table}[t]
\caption{Comparison between $f$ with strong pairwise monotonic features and an additive approximation $\widetilde{f} = f_1 + f_2$. $f$ is strongly monotonic with respect to $x_{\beta}$ over $x_{\gamma}$. $\widetilde{f}$ violates strong pairwise monotonicity at $\*x=(1,1)$.}
\label{tab:stat_inter_fail}
\vskip 0.15in
\begin{center}
\begin{small}
\begin{sc}
\begin{tabular}{l|cccr}
\toprule
 True $f$ & & & \\ 
 \midrule
 2 & 0.4 &   &  \\ 
 1 & 0.3 & \textbf{0.35} &  \\ 
 0 & 0 & 0.2 & 0.3 \\ \hline
 $x_{\beta} \backslash x_{\gamma}$ & 0 & 1 & 2 \\
 \midrule
 $\widetilde{f}=f_1+f_2$ & & & \\
 \midrule
 2 & 0.4 &   &  \\ 
 1 & 0.3 &  {\color{red} \textbf{0.5}} &  \\ 
 0 & 0 & 0.2 & 0.3 \\ \hline
 $x_{\beta} \backslash x_{\gamma}$ & 0 & 1 & 2 \\
\bottomrule
\end{tabular}
\end{sc}
\end{small}
\end{center}
\vskip -0.1in
\end{table}

\end{example}

\subsection{In the Presence of Strong Pairwise Monotonicity}

We argue that there exists a common situation in which features with strong pairwise monotonicity cannot be separated, except in the trivial case. Let us consider the following proposition, whereas the proof is in Appendix~\ref{sec:appen_proof}. 
\begin{proposition}\label{prop:nonseparable}
Suppose $f$ takes the GAM form \eqref{eq:GAMs}, $f$ is differentiable, individually monotonic with respect to $x_{\beta}$ and $x_{\gamma}$, and strongly monotonic with respect to $x_{\beta}$ over $x_{\gamma}$. If there exists $x^*$ such that $f_{\beta}'(x^*) = 0$, then $f_{\gamma}(x)$ is a constant function.
\end{proposition}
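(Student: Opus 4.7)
The plan is to exploit the fact that in the GAM form, each partial derivative depends on only one variable, which decouples the strong pairwise monotonicity inequality into a pointwise comparison between two single-variable functions.

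First I would rewrite the derivatives explicitly: under the GAM form \eqref{eq:GAMs}, $\partial f/\partial x_\beta = f_\beta'(x_\beta)$ and $\partial f/\partial x_\gamma = f_\gamma'(x_\gamma)$, each independent of the other coordinate. Then, by applying the differentiable characterization \eqref{eq:mono_verify3} of strong pairwise monotonicity of $x_\beta$ over $x_\gamma$, I get
\begin{equation*}
f_\beta'(x_\beta) - f_\gamma'(x_\gamma) \geq 0 \quad \text{for all } x_\beta, x_\gamma \in \mathbb{R}.
\end{equation*}
Because the left-hand side separates into a function of $x_\beta$ minus a function of $x_\gamma$, I can take the infimum over $x_\beta$ and the supremum over $x_\gamma$ independently to conclude that $\inf_{x_\beta} f_\beta'(x_\beta) \geq \sup_{x_\gamma} f_\gamma'(x_\gamma)$.

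Next I would plug in the hypothesis: since there exists $x^\ast$ with $f_\beta'(x^\ast) = 0$, the infimum on the left is at most $0$, and hence $f_\gamma'(x) \leq 0$ for every $x \in \mathbb{R}$. On the other hand, individual monotonicity of $f$ with respect to $x_\gamma$ gives, again via the GAM form and \eqref{eq:mono_verify1}, that $f_\gamma'(x) \geq 0$ for every $x$. Combining the two bounds yields $f_\gamma'(x) \equiv 0$, so $f_\gamma$ is constant, which is the desired conclusion.

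I do not foresee a major obstacle; the only subtlety is the decoupling step, where one must be careful that the separability of the GAM form is what allows the universally quantified inequality to be split into independent bounds on each single-variable derivative. Without the additive structure this step would fail, which is precisely why the proposition says something sharp about the GAM setting.
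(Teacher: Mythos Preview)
Your argument is correct and matches the paper's proof essentially step for step: both use the GAM structure to decouple the strong pairwise monotonicity condition \eqref{eq:mono_verify3} into $\min_x f_\beta'(x) \geq \max_x f_\gamma'(x)$, combine this with the individual monotonicity bound $f_\gamma' \geq 0$, and then invoke $f_\beta'(x^\ast)=0$ to force $f_\gamma' \equiv 0$. The only difference is cosmetic ordering; no substantive deviation.
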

According to the Proposition, under such additive forms, $f_{\gamma}$ is a constant function, which can be inconsistent with reality. Sadly, such phenomena are common in practice, and one of the most common causes is diminishing marginal effects. We provide the definition below. 

\begin{definition}
Suppose $\*x = (x_{\alpha},\*x_{\neg \alpha})$. We say a differentiable function $f$ has the \textbf{diminishing marginal effect (DME)} with respect to $x_{\alpha}$ if followings hold
\begin{enumerate}
    \item $\frac{\partial}{\partial x_{\alpha}} f(x_{\alpha},\*x_{\neg \alpha}) > 0$
    \item $\frac{\partial^2}{\partial x_{\alpha}^2} f(x_{\alpha},\*x_{\neg \alpha}) < 0$
    \item $\lim_{x_{\alpha} \rightarrow \infty}$ $\frac{\partial}{\partial x_{\alpha}} f(x_{\alpha},\*x_{\neg \alpha}) = 0$.
\end{enumerate}
\end{definition}
As a matter of fact, DMEs are quite common in practice. For example, the Cobb-Douglas utility function, $u(x,y) = x^{a} y^{1-a}$ with $0<a<1$, is commonly used to illustrate diminishing marginal utility in economics. 




Proposition~\ref{prop:nonseparable} suggests that DMEs may prevent us from separating features with strong pairwise monotonicity. Features with strong pairwise monotonicity that exhibits DME patterns must be assumed to be non-separable at the time of its emergence. Therefore, GAMs are insufficient to incorporate strong pairwise monotonicity in this case. 


\subsection{Implications on Binary Features}

There is an exception to the previous analysis, which is when features are binary  since DMEs do not apply. In this case, we have the following Lemma, whereas the proof is left in Appendix~\ref{sec:appen_proof}. 
\begin{lemma} \label{lem:binary_mono}
For binary features, weak pairwise monotonicity coincides with strong pairwise monotonicity. 
\end{lemma}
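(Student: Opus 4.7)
The plan is to verify both directions of the claimed equivalence. Since Lemma~\ref{lem:str_imply_weak} already gives that strong pairwise monotonicity implies weak pairwise monotonicity in general, the content of the statement reduces to the converse implication under the assumption that $x_\beta$ and $x_\gamma$ are binary.

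For the converse, I would exploit the fact that ``binary feature'' implicitly restricts attention to inputs in $\{0,1\}$, so any instance of Definition~\ref{def:str_mono} invoked must have $x_\beta, x_\gamma, x_\beta + c, x_\gamma + c$ all lying in $\{0,1\}$. Enumerating the four base points $(x_\beta, x_\gamma) \in \{0,1\}^2$: whenever either coordinate equals $1$, adding any $c > 0$ forces a binary coordinate outside $\{0,1\}$, so the corresponding instance of strong pairwise monotonicity is vacuous. The only surviving case is $(x_\beta, x_\gamma) = (0,0)$ with $c = 1$, yielding the single nontrivial inequality
\begin{align*}
f(0, 1, \*x_\neg) \;\leq\; f(1, 0, \*x_\neg).
\end{align*}

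The next step is to observe that this same instance is equally an instance of Definition~\ref{def:weak_mono}, because at $(x_\beta, x_\gamma) = (0,0)$ the side condition $x_\beta = x_\gamma$ is trivially satisfied with $c = 1$. Hence weak pairwise monotonicity for binary features already delivers every nontrivial instance of strong pairwise monotonicity; combined with Lemma~\ref{lem:str_imply_weak}, the two conditions coincide.

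I do not anticipate a substantive obstacle here; the only subtlety is making the domain convention explicit, namely that ``binary'' restricts the admissible values of both the base point and the shifted point. Once that is stated, the argument is a four-case enumeration with three vacuous cases and one case where weak and strong reduce to the same inequality.
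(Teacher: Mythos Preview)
Your proposal is correct and follows essentially the same approach as the paper: both reduce the nontrivial content to the single inequality $f(0,1,\*x_{\neg}) \leq f(1,0,\*x_{\neg})$, observe that weak and strong pairwise monotonicity coincide on this instance, and invoke Lemma~\ref{lem:str_imply_weak} for the other direction. Your version is simply more explicit about the case enumeration and the domain convention for binary features, whereas the paper states the conclusion more tersely.
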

In this case, features can still be additive separable in the linear form. Consider the linear regression of the following form for simplicity
\begin{align*}
f(\*x) = \alpha + \sum_{i=1}^m \beta_i x_i.
\end{align*}
Suppose $f$ is monotonic with respect to $x_{\gamma}$ over $x_{\delta}$, then we require $\beta_{\gamma}>\beta_{\delta}$, and the additive separability can be achieved.

\section{Monotonic Groves of Neural Additive Models}

There has been an increasing demand for transparent models recently. In this direction, 
Neural additive models (NAMs) \cite{agarwal2021neural} and its monotonic version \cite{chen2022monotonic} provide the most transparent neural networks by avoiding statistical interactions, and have been very successful. NAMs have assumed that each $f_i$ in Equation \eqref{eq:GAMs} is parametrized by neural networks (NNs). Despite their success, they cannot handle strong pairwise monotonicity, as discussed above. We aim to develop a new model that will maintain transparency to the greatest extent possible, in the manner of NAMs, as well as incorporate strong pairwise monotonicity. 
Thus, we consider a more general form, namely the groves of neural additive models (GNAMs), similar to \cite{sorokina2008detecting},
\begin{align} \label{eq:GNAM}
    f(\*x) = \alpha  + \sum_{p: p \in P} f_p(x_p) + \sum_{q: q \in Q} f_q(\*x_q),
\end{align}
$f_p$ and $f_q$ are parametrized by NNs. There exists five types of features:
\begin{itemize}
\item Nonmonotonic features
\item Features with only individual monotonicity
\item Features with only weak pairwise monotonicity
\item Features with only strong pairwise monotonicity
\item Features with both strong and weak pairwise monotonicity
\end{itemize}
The first three types of features are trained by 1-dimensional functions $f_p$, just like monotonic NAMs (MNAMs) \cite{chen2022monotonic}. Different from MNAMs, $\*x_q$ can be higher-dimensional. For the last two types, when there is strong pairwise monotonicity involved, features with pairwise monotonicity should be grouped together in $q$. Note we group features with both strong and weak pairwise monotonicity to avoid unfair comparisons. Detailed explanations can be found in Appendix~\ref{sec:mono_remark}.

Regularized algorithms are used to enforce monotonicity. In GNAMs' architecture, motivated by conditions \eqref{eq:mono_verify1}, \eqref{eq:mono_verify2}, and \eqref{eq:mono_verify3}, we consider the optimization problem:
\begin{align} \label{eq:MNAM_loss}
\min_{\~{\Theta}} \ell(\~{\Theta}) + \lambda_1 h_1(\~{\Theta}) + \lambda_2 h_2(\~{\Theta}) + \lambda_3 h_3(\~{\Theta}),
\end{align}
where $\ell(\~{\Theta})$ is the mean-squared error for regressions and log-likelihood function for classifications, and
\begin{itemize}
\item Individual monotonicity: suppose $\~{\alpha}$ is the list of individual monotonic features, then
\begin{align*}
h_1(\~{\Theta}) &= \sum_{\alpha \in \~{\alpha}} \int_{\mathbb{R}^m} \max\left(0,-\frac{\partial f(\*x; \~{\Theta})}{\partial x_{\alpha}}  \right)^2  \ d \*x.
\end{align*}
\item Weak pairwise monotonicity: suppose $\*u$ and $\*v$ are weak pairwise monotonic lists such that $f$ is weakly monotonic with respect to $u_i$ over $v_i$, then
\begin{align*}
h_2(\~{\Theta}) &= \sum_{i=1}^{|\*u|} \int_{\mathbb{R}^{m-1}}  \max\left(0, \Delta f(\widetilde{\*x}_i,\~{\Theta})  \right)^2  \ d\widetilde{\*x}_i
\end{align*}
where 
\begin{align*}
\Delta f(\widetilde{\*x}_i,\~{\Theta}) = -\frac{\partial f (\widetilde{\*x}_i;\~{\Theta})}{\partial x_{u_i}} + \frac{\partial f (\widetilde{\*x}_i;\~{\Theta})}{\partial x_{v_i}}
\end{align*}
and $x_{u_i} = x_{v_i}$ in $\widetilde{\*x}_i$.
\item Strong pairwise monotonicity: suppose $\*y$ and $\*z$ are strong pairwise monotonic lists such that $f$ is strongly monotonic with respect to $y_i$ over $z_i$, then
\begin{align*}
h_3(\~{\Theta}) &= \sum_{i=1}^{|\*y|} \int_{\mathbb{R}^{m}}  \max\left(0, \Delta f_i(\*x,\~{\Theta})  \right)^2  \ d\*x 
\end{align*}
where 
\begin{align*}
\Delta f_i(\*x,\~{\Theta}) = -\frac{\partial f (\*x;\~{\Theta})}{\partial x_{y_i}} + \frac{\partial f (\*x;\~{\Theta})}{\partial x_{z_i}}.
\end{align*}
\end{itemize}
In the GNAM's architecture \eqref{eq:GNAM}, computational dimensions can be reduced. For example, when calculating partial derivatives for features in the group $q$, it is sufficient to evaluate $\partial f_q$  instead of $\partial f$. In practice, we replace the integral with the equispaced discrete approximations. In the optimization procedure, we also replace all $\max(0,\cdot)$ with $\max(\epsilon,\cdot)$.

We gradually increase $\lambda_1$, $\lambda_2$, and $\lambda_3$ until penalty terms vanish. The two-step procedure is summarized in Algorithm~\ref{alg:MGNAM}. We refer to the GNAM that satisfies all required monotonic constraints \eqref{eq:mono_con1}, \eqref{eq:mono_con2}, and \eqref{eq:mono_con3} as the monotonic groves of neural additive model (MGNAM). 


\begin{algorithm}[tb]
  \caption{Monotonic Groves of Neural Additive Model}
  \label{alg:MGNAM}
\begin{algorithmic}
  \STATE {\bfseries Initialization:} $\lambda_1=\lambda_2=\lambda_3=0$, the architecture of the GNAM ($P$ and $Q$)
  \STATE Train a GNAM by \eqref{eq:MNAM_loss}
  \WHILE{$\min(h_1,h_2,h_3)>0$} 
    \STATE Increase $\lambda_i$ if $h_i>0$
    \STATE Retrain the GNAM by \eqref{eq:MNAM_loss}.
  \ENDWHILE
\end{algorithmic}
\end{algorithm}

\section{Empirical Examples}

This section evaluates the performance of models for a variety of datasets in different fields, including finance, criminology, and health care. We compare fully-connected neural networks (FCNNs), neural additive models (NAMs), monotonic neural additive models (MNAMs), and monotonic groves of neural additive models (MGNAMs). For MNAMs, strong pairwise monotonicity is replaced by weak pairwise monotonicity. We use FCNNs to check the accuracy of black-box ML models and NAMs/MNAMs for visualizations. We do not consider other models here as the general comparison of accuracy is not our focus, but the conceptual soundness and fairness. More details of the dataset, models, and experiments setup are provided in Appendix~\ref{sec:empirical_example}.

\subsection{Finance - Credit Scoring}

In credit scoring, statistical models are used to assess an individual's creditworthiness. A popularly used dataset is the Kaggle credit score dataset \footnote{https://www.kaggle.com/c/GiveMeSomeCredit/overview}. In this dataset, we have included three delinquency features that quantify the number of past dues and their duration: 30-59 days, 60-89 days, and 90+ days. To demonstrate the strong pairwise monotonicity of this dataset, we focus on these three features. Without loss of generality, we denote them as $x_1$, $x_2$, and $x_3$. When an additional past due exceeds 90 days, the system should take it much more seriously than when it exceeds 60-89 days, which should take it much more seriously than when it exceeds 30-59 days. We, therefore, impose strong pairwise  monotonicity on this order. In the event that such strong pairwise monotonicity is violated, customers with longer past dues could have a higher credit score, thereby causing algorithmic unfairness. In addition, customers with shorter past dues may wish to delay their payments in order to increase their credit score. 

A summary of the model performance is provided in Table~\ref{tab:GMSC_result}. There is no significant difference in accuracy between the different methods, indicating that transparent neural networks are sufficient for this dataset. 


\begin{table}[t]
\caption{Model performance of the GMSC dataset. All ML models perform similarly.}
\label{tab:GMSC_result}
\vskip 0.15in
\begin{center}
\begin{small}
\begin{sc}
\begin{tabular}{l|cccr}
\toprule
    Model/Metrics  & Classification error & AUC  \\ 
    \midrule
    FCNN & $6.6\%$ & $79.5\%$ \\ 
    NAM & $6.6\%$ & $79.8\%$ \\ 
    MNAM & $6.6\%$ & $80.0\%$ \\ 
    MGNAM & $6.5\%$ & $80.2\%$ \\ 
\bottomrule
\end{tabular}
\end{sc}
\end{small}
\end{center}
\vskip -0.1in
\end{table}

Next, we evaluate conceptual soundness and fairness. For simplicity, we focus on the number of past dues in each period that are less than or equal to two, that is, $0 \leq x_1, x_2, x_3 \leq 2$. We will begin by examining the result of the NAM since it is straightforward to visualize. A comparison of the associated functions is provided in Figure~\ref{fig:GMSC_vio}. The pairwise monotonicity is clearly violated when there is more than one past due. For example, the feature with 30-59 days past due becomes more important than the feature with 60-89 days past due. Then, we evaluate the MNAM with function values in Table~\ref{tab:GMSC_MNAM}. Both individual monotonicity and weak pairwise monotonicity are satisfied. But when statistical interactions are involved for large x, the strong pairwise monotonicity is violated. As an example, consider an applicant who has three past dues, with $x_3=1$. If $(x_1,x_2,x_3)=(0,2,1)$, then it should be punished more severely than $(x_1,x_2,x_3)=(1,1,1)$; however, according to the MNAM, $f_1(0)+f_2(2)+f_3(1)=3.4$, which is less than $f_1(1)+f_2(1)+f_3(1)=3.9$. Therefore, based on the MNAM, for the person with (0,1,1), if the applicant did not pay for one month and received (1,1,1), then he or she should wait and pay one payment in an additional month to achieve (0,2,1) for a higher credit score (lower probability of default). Clearly, the fairness of this situation has been violated.

We then examine the result of the MGNAM. We are interested in knowing if delinquency features can be separated additively. We plot the marginal probability of default as a function of $x_1-x_3$ in Figure~\ref{fig:GMSC_DME}. The presence of DMEs is evident. By Proposition~\ref{prop:nonseparable}, we cannot separate these three features additively and therefore group them together. The values of $f_q(x_1,x_2,x_3)$ calculated by MGNAM are shown in Table~\ref{tab:GMSC_GNAM}. The table provides confidence to model users by verifying all monotonicity is achieved. It should be emphasized that without satisfying monotonicity, even the most accurate ML model will not be accepted. Furthermore, the transparent nature of the MGNAM makes it easier to verify conceptual soundness and fairness, which are difficult to achieve with black-box machine learning models.  


\begin{figure}[ht]
\vskip 0.2in
\begin{center}
\centerline{\includegraphics[width=\columnwidth]{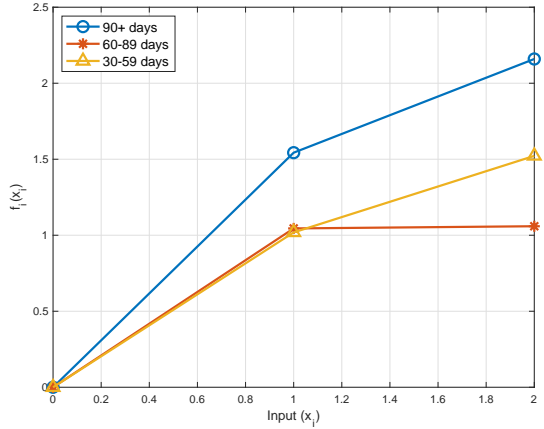}}
\caption{Comparison of functions associated with the number of past dues in different periods by the NAM for the GMSC dataset. Monotonicity is violated between 30-59 days and 60-89 days.}
\label{fig:GMSC_vio}
\end{center}
\vskip -0.2in
\end{figure}


\begin{figure}[ht]
\vskip 0.2in
\begin{center}
\centerline{\includegraphics[width=\columnwidth]{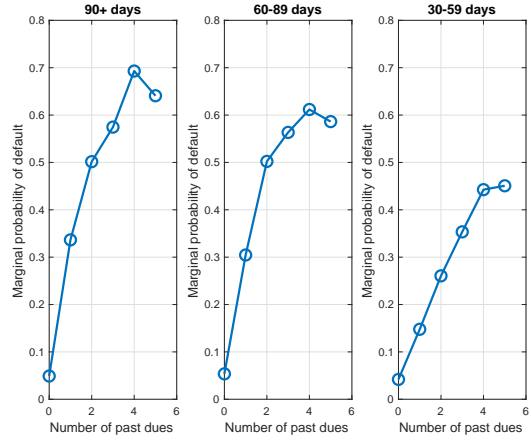}}
\caption{Marginal probability of defaults with respect to $x_1-x_3$ in the GMSC dataset. Diminishing marginal effects are observed.}
\label{fig:GMSC_DME}
\end{center}
\vskip -0.2in
\end{figure}


\begin{table}[t]
\caption{Function values for $x_1,x_2,x_3$ by the MNAM in the GMSC dataset. $f$ is weakly monotonic with respect to $x_3$ over $x_2$ and $x_2$ over $x_1$. Individual and weak pairwise monotonicity are preserved.}
\label{tab:GMSC_MNAM}
\vskip 0.15in
\begin{center}
\begin{small}
\begin{sc}
\begin{tabular}{l|cccr}
\toprule
    $f \backslash x$ & 0 & 1 & 2 \\ \hline
    $f_1$  & $0$ & $0.8$ & $1.0$  \\ 
    $f_2$ & 0 & 1.4 & 1.7 \\ 
    $f_3$ & 0 & 1.7 & 2.2 \\ 
\bottomrule
\end{tabular}
\end{sc}
\end{small}
\end{center}
\vskip -0.1in
\end{table}


\begin{table}[t]
\caption{Function values for $x_1,x_2,x_3$ by the MGNAM in the GMSC dataset. $f$ is strongly monotonic with respect to $x_3$ over $x_2$ and $x_2$ over $x_1$. Monotonicity is preserved.}
\label{tab:GMSC_GNAM}
\vskip 0.15in
\begin{center}
\begin{small}
\begin{sc}
\begin{tabular}{l|cccr}
\toprule
    $x_3=0$ & & & \\ \hline
    $x_1 \backslash x_2$  & $0$ & $1$ & $2$  \\ \hline
    $0$ & 0 & 1.7 & 2.3 \\ 
    $1$ & 1.7 & 2.3 & 2.8 \\ 
    $2$ & 2.3 & 2.8 & 3.2 \\ \hline
    $x_3=1$ & & & \\ \hline
    $x_1 \backslash x_2$  & $0$ & $1$ & $2$  \\ \hline
    $0$ & 2.2 & 2.7 & 3.2 \\ 
    $1$ & 2.7 & 3.1 & 3.5 \\ 
    $2$ & 3.1 & 3.5 & 3.7 \\ \hline
    $x_3=2$ & & & \\ \hline
    $x_1 \backslash x_2$  & $0$ & $1$ & $2$  \\ \hline
    $0$ & 3.1 & 3.4 & 3.7 \\ 
    $1$ & 3.4 & 3.6 & 3.8 \\ 
    $2$ & 3.6 & 3.8 & 3.9 \\ 
\bottomrule
\end{tabular}
\end{sc}
\end{small}
\end{center}
\vskip -0.1in
\end{table}

\subsection{Criminal Justice - COMPAS}

The COMPAS scoring system was developed to predict recidivism risk and has been scrutinized for its racial bias \cite{angwin2016machine,dressel2018accuracy,tan2018distill}. In 2016, ProPublica published recidivism data for defendants in Broward County, Florida \cite{Propublica2016COMPAS}. We focus on the simplified cleaned dataset provided in \cite{dressel2018accuracy}. Race and gender unfairness have been extensively studied in the past \cite{foulds2020intersectional,kearns2019empirical,kearns2018preventing,hardt2016equality}. Our focus is on the potential unfairness associated with types of offenses. Specifically, a felony is considered more serious than a misdemeanor. Without loss of generality, assume $x_1$ counts the number of past misdemeanors and $x_2$ counts the number of past felonies. Due to this, we ask that the probability of recidivism be strongly monotonic with respect to $x_2$ over $x_1$. Criminals may consider turning a misdemeanor into a felony in the future if this strong pairwise monotonicity is violated. 

Model performance is summarized in Table~\ref{tab:COMPAS_result}. The performance of all methods is similar. In this regard, algorithmic fairness is more important than accuracy when it comes to the dataset. 


\begin{table}[t]
\caption{Model performance of the COMPAS dataset. All ML models perform similarly.}
\label{tab:COMPAS_result}
\vskip 0.15in
\begin{center}
\begin{small}
\begin{sc}
\begin{tabular}{l|cccr}
\toprule
    Model/Metrics  & Classification error & AUC  \\ 
    \midrule
    FCNN & $33.8\%$ & $71.9\%$ \\
    NAM & $34.1\%$ & $71.8\%$ \\ 
    MNAM & $33.5\%$ & $71.7\%$ \\ 
    MGNAM & $34.3\%$ & $71.9\%$ \\ 
\bottomrule
\end{tabular}
\end{sc}
\end{small}
\end{center}
\vskip -0.1in
\end{table}

Next, we evaluate conceptual soundness and fairness. For simplicity's sake, we restrict ourselves to a maximum of three charges per type. Regarding the architecture of the MGNAM, the diminishing marginal effect is clearly observed for the felony in Figure~\ref{fig:COMPAS_DME}, therefore we should group the felony and misdemeanor together, based on Proposition~\ref{prop:nonseparable}. Due to the fact that there are only two features in the group, function values are calculated and compared in tables~\ref{tab:COMPAS_GNAM}. For small values of $x_1$ and $x_2$, functions behave reasonably in the NAM. For larger values, it immediately violates pairwise monotonicity. The individual monotonicity of $x_2$ is violated when the value of $x_1$ is fixed. Furthermore, the function contribution is only 0.37 when there are three past felonies ($x_2=3$), whereas the function value is 0.65 when there is one felony and one misdemeanor ($x_1=x_2=1$). Compared to the first case, the value is almost doubled, which is a serious violation. Then, we evaluate the MNAM. Both individual monotonicity and weak pairwise monotonicity are satisfied. But when statistical interactions are involved for large x, the strong pairwise monotonicity is violated. Consider the example of $(x_1,x_2) = (0,2)$ which should be punished more severely than $(1,1)$. However, according to the MNAM, the value of the function at $(0,2)$ is $0.37$, which is less than the value at $(1,1)$ as $0.50$. Consequently, a person who commits one felony and one misdemeanor will be punished more severely than a person who commits two felonies. There is a serious violation of the principle of fairness in this situation. Additionally, if someone with one felony commits another crime, he or she may consider it to be a felony rather than a misdemeanor, leading to difficulties in society. In our model, this issue has been avoided, since the value of the function at $(0,2)$ is $0.54$, which is larger than $0.53$ at $(1,1)$. There are many other similar examples of violations. In the absence of such strong pairwise monotonicity, the algorithm should not be used. 


\begin{figure}[ht]
\vskip 0.2in
\begin{center}
\centerline{\includegraphics[width=\columnwidth]{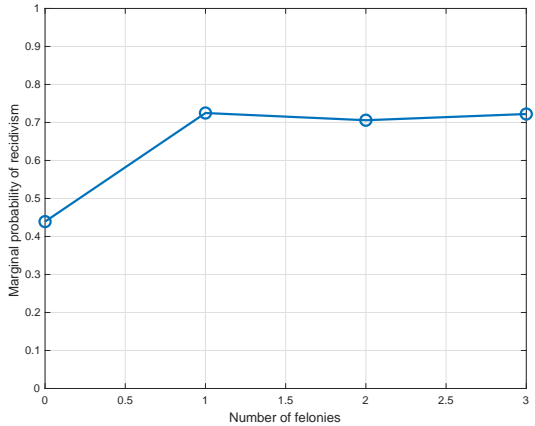}}
\caption{Marginal probability of recidivism with respect to the number of felonies in the COMPAS dataset. The diminishing marginal effect is observed.}
\label{fig:COMPAS_DME}
\end{center}
\vskip -0.2in
\end{figure}


\begin{table}[t]
\caption{Function values for $x_1,x_2$ by the MNAM and MGNAM of the COMPAS dataset. There are multiple violations of monotonicity for the NAM, for example, between $(2,2)$ and $(2,3)$, and between $(0,3)$ and $(1,1)$. Violations are also observed for the MNAM, for example, between $(0,2)$ and $(1,1)$. The MGNAM preserves monotonicity.}
\label{tab:COMPAS_GNAM}
\vskip 0.15in
\begin{center}
\begin{small}
\begin{sc}
\begin{tabular}{l|cccr}
\toprule
    MGNAM & & & & \\ \hline
    $x_1 \backslash x_2$  & $0$ & $1$ & $2$ & 3  \\ \hline
    $0$ & 0 & 0.35 & 0.54 & 0.56  \\ 
    $1$ & 0.21 & 0.53 & 0.56 & 0.56 \\ 
    $2$ & 0.49 & 0.55 & 0.56 & 0.56 \\ 
    3 & 0.55 & 0.56 & 0.56 & 0.56 \\ \hline
    NAM & & & & \\ \hline
    $x_1 \backslash x_2$  & $0$ & $1$ & $2$ & 3  \\ \hline
    $0$ & 0 & 0.41 & 0.40 & {\color{red} \textbf{0.37}}  \\ 
    $1$ & 0.24 & {\color{red} \textbf{0.65}} & 0.65 & 0.62 \\ 
    $2$ & 0.32 & 0.72 & {\color{blue} \textbf{0.72}} & {\color{blue} \textbf{0.69}} \\
    3 & 0.33 & 0.74 & 0.73 & 0.70 \\ \hline
    MNAM & & & & \\ \hline
    $x_1 \backslash x_2$  & $0$ & $1$ & $2$ & 3  \\ \hline
    $0$ & 0 & 0.33 & {\color{purple} \textbf{0.37}} & 0.37  \\ 
    $1$ & 0.17 & {\color{purple} \textbf{0.50}} & 0.54 & 0.54 \\ 
    $2$ & 0.19 & 0.53 & 0.57 & 0.57 \\ 
    3 & 0.20 & 0.53 & 0.57 & 0.57 \\ 
\bottomrule
\end{tabular}
\end{sc}
\end{small}
\end{center}
\vskip -0.1in
\end{table}

\subsection{Healthcare - Heart Failure Clinical Records}

This dataset \cite{ahmad2017survival,chicco2020machine} contains the medical records of 299 patients who had heart failure, collected during their follow-up period, where each patient profile has 13 clinical features. This study aims to predict the survival of patients suffering from heart failure. Conceptual soundness is a very important aspect of health datasets. With limited dataset, machine learning models are very easy to overfit, which can be mitigated by imposing constraints. In the case that one needs to determine the priority of patients, then fairness is also a very important factor. For this dataset, we focus on four features: smoking, anemia, high blood pressure, and diabetes. Without loss of generality, we denote them as $x_1$, $x_2$, $x_3$, and $x_4$. Anemia, high blood pressure, and diabetes are considered to be more serious health risks than smoking. Thus, $f$ should be monotonic with respect to $x_2-x_4$ over $x_1$. Due to the fact that they are all binary features, strong monotonicity is the same as weak monotonicity, by Lemma~\ref{lem:binary_mono}. 

A summary of the results is provided in Table~\ref{tab:heart_result}. Since the NAM performs similarly to the FCNN and features associated with pairwise monotonicity are only binary, we do not consider interactions, and the MGNAM coincides with the MNAM. The MGNAM also has a similar level of accuracy. 


\begin{table}[t]
\caption{Model performance of the heart dataset. All ML models perform similarly. }
\label{tab:heart_result}
\vskip 0.15in
\begin{center}
\begin{small}
\begin{sc}
\begin{tabular}{l|cccr}
\toprule
    Model/Metrics  & Classification error & AUC  \\ 
    \midrule
    FCNN & $20.3\%$ & $87.0\%$ \\ 
    NAM & $18.9\%$ & $89.8\%$ \\ 
    MGNAM & $17.6\%$ & $90.6\%$ \\ 
\bottomrule
\end{tabular}
\end{sc}
\end{small}
\end{center}
\vskip -0.1in
\end{table}

Next, we evaluate conceptual soundness and fairness. For blood and diabetes in the NAM, both individual and pairwise monotonicity are violated, as shown in Figure~\ref{fig:heart_blood} and Figure~\ref{fig:heart_diabetes}. This problem has been avoided by MGNAM. According to the NAM, high blood pressure and diabetes are actually beneficial for survival. Furthermore, smoking is more dangerous than both of them. 

\begin{figure}[h]
\begin{subfigure}
    \centering
    \includegraphics[scale=0.4]{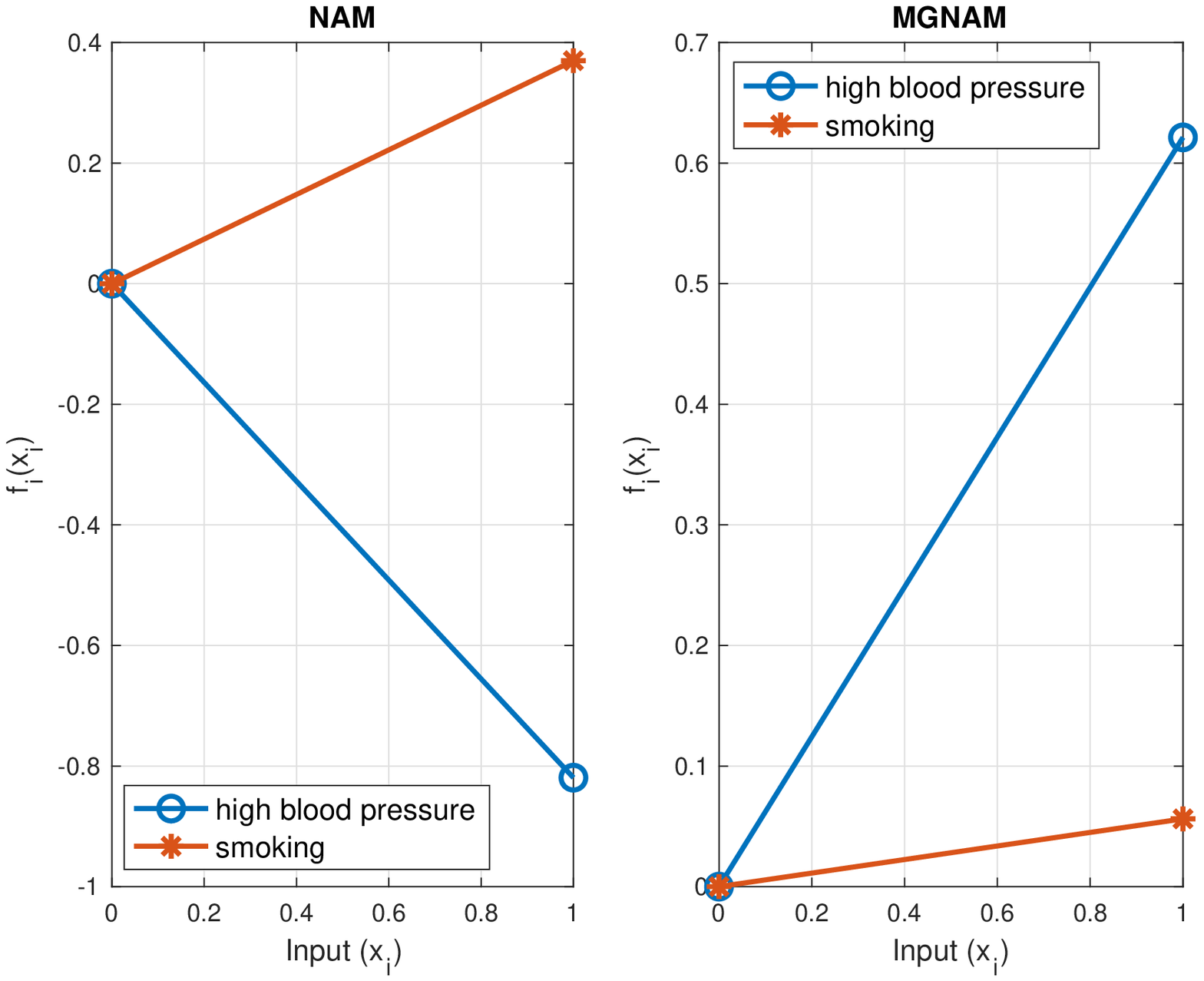}
    \caption{Comparison of blood and smoking functions by the NAM and the MGNAM for the heart dataset. The individual and pairwise monotonicity are both violated by the NAM. }
    \label{fig:heart_blood}
\end{subfigure}%
\vskip 0.2in
\begin{subfigure}
    \centering
    \includegraphics[scale=0.4]{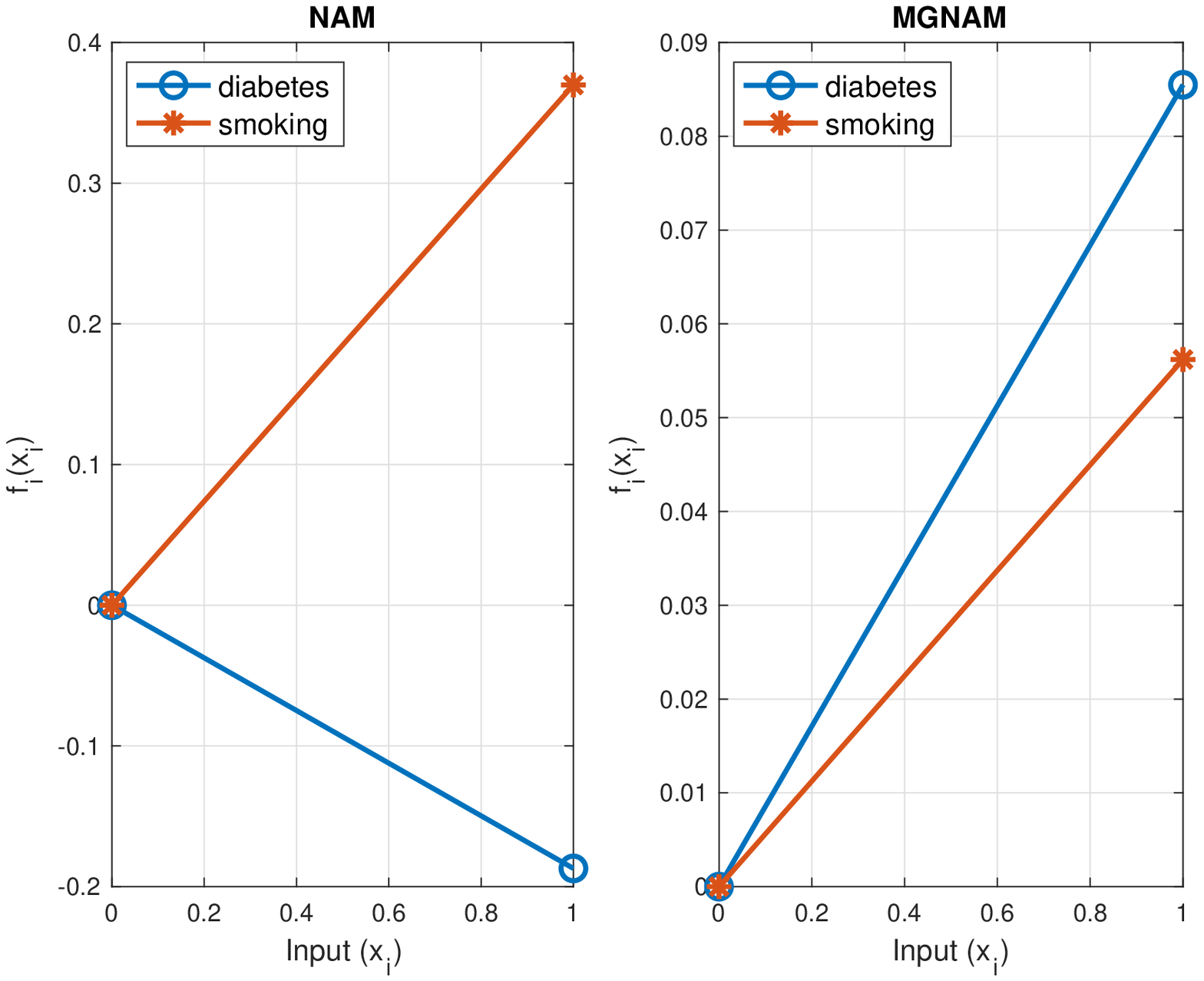}
    \caption{Comparison of diabetes and smoking functions by the NAM and the MGNAM for the diabetes dataset. The individual and pairwise monotonicity are both violated by the NAM. }
    \label{fig:heart_diabetes}
\end{subfigure}
\vskip -0.2in
\end{figure}

\section{Related work}

\textbf{Monotonic Models}: 
 Most of previous work \cite{yanagisawa2022hierarchical,liu2020certified,milani2016fast,you2017deep} focus on individual monotonicity. Weak pairwise monotonicity is considered in \cite{chen2022monotonic} and strong pairwise monotonicity is considered in \cite{gupta2020multidimensional}. Our paper has considered three types of monotonicity. 
 
 \textbf{Transparent Models}: 
 There has been enormous literature on designing transparent machine learning models. \cite{agarwal2021neural,chen2022monotonic,yang2021gami,lou2012intelligible} starts with transparent generalized additive models. Another direction specifies neural network models based on statistical interactions \cite{janizek2021explaining,tsang2018neural,tsang2020does,tsang2018detecting}. However, these approaches haven't yet included the discussion between monotonicity and transparency.

\section{Conclusion}

In this paper, we analyze three types of monotonicity and propose monotonic groves of neural additive models (MGNAMs) for transparency and monotonicity.

There are many avenues for future directions. First, the regularized algorithm with discretized integrals in the penalty functions in order to enforce monotonicity. It is possible to achieve high accuracy with continuous features by using a large number of points, however, certification is not yet available for three types of monotonicity. Second, there are many applications in which these integrals are appropriate when the dimensions of pairwise monotonic features are small. Nevertheless, there is the possibility of having a large collection of pairwise monotonic features in some contexts. In the future, we plan to investigate possible fast algorithms for implementing pairwise monotonicity. Third, in the spirit of neural additive models, we keep MGNAM architectures as simple as possible to preserve the transparency of models. There is, however, a possibility that some datasets will exhibit other interactions. The detection of statistical interactions will be studied in the future in the presence of three types of monotonicity. Further, conjoint measurement and multiple criteria decision analysis \cite{bouyssou2016conjoint,grabisch2018monotone} are also concerned with transparency when dealing with complex statistical interactions. The inclusion of some analysis will be of interest.

\section*{Acknowledgements}

We are grateful to the reviewers for their valuable and constructive comments.


\bibliography{example_paper}
\bibliographystyle{icml2023}

\newpage
\appendix
\onecolumn
\section{Appendix}

\subsection{Proof} \label{sec:appen_proof}

\begin{proof}[Proof of Lemma~\ref{lem:str_tran}]

We partition $\*x = (x_{\beta},x_{\gamma},x_{\delta},\*x_{\neg})$, based on Definition~\ref{def:str_mono}, we have
\begin{align}
& f(x_{\beta},x_{\gamma}+c,x_{\delta},\*x_{\neg}) \leq f(x_{\beta}+c,x_{\gamma},x_{\delta},\*x_{\neg}), \\
& f(x_{\beta},x_{\gamma},x_{\delta}+c,\*x_{\neg}) \leq 
f(x_{\beta},x_{\gamma}+c,x_{\delta},\*x_{\neg}), \\
& \forall x_{\beta},x_{\gamma},x_{\delta},\*x_{\neg}, \forall c \in \mathbb{R}^+.
\end{align}
These imply that
\begin{align}
f(x_{\beta},x_{\gamma},x_{\delta}+c,\*x_{\neg}) \leq f(x_{\beta}+c,x_{\gamma},x_{\delta},\*x_{\neg}).
\end{align}
Thus, we conclude. 
\end{proof}

\begin{proof}[Proof of Lemma~\ref{lem:weak_tran}]

If $f$ is weakly monotonic with respect to $x_{\beta}$ over $x_{\gamma}$ and $x_{\gamma}$ over $x_{\delta}$, we have
\begin{align}
f_{\beta}'(x) \geq f_{\gamma}'(x), f_{\gamma}'(x) \geq f_{\delta}'(x), \ \forall x \in \mathbb{R}. 
\end{align}
Therefore, we have $f_{\beta}'(x) \geq f_{\delta}'(x)$, $\forall x \in \mathbb{R}$.
\end{proof}

\begin{proof}[Proof of Proposition~\ref{prop:nonseparable}]
$f$ has the form \eqref{eq:GAMs}, $f$ is differentiable, and is strongly monotonic with respect to $x_{\beta}$ over $x_{\gamma}$, therefore
\begin{align}
\min_x f_{\beta}'(x) \geq \max_x f_{\gamma}'(x) \geq 0.
\end{align}
where we have assumed monotonically increasing without loss of generality in the content. 
Now at $x^*$,  $f_{\beta}'(x^*) = 0$. Therefore, $f_{\gamma}'(x)=0, \forall x$, and $f_{\gamma}(x)$ is a constant function. 
\end{proof}

\begin{proof}[Proof of Lemma~\ref{lem:binary_mono}]

We partition $\*x = (x_{\beta},x_{\gamma},\*x_{\neg})$, where $x_{\beta}$ and $x_{\gamma}$ are binary. Suppose $f$ is weakly monotonic with respect to $x_{\beta}$ over $x_{\gamma}$, then we have
\begin{align}
f(0,1,\*x_{\neg}) & \leq f(1,0,\*x_{\neg}). 
\end{align}
Note this coincides with the inequality required for strong pairwise monotonicity for the binary feature. Strong pairwise monotonicity implies weak pairwise monotonicity based on Lemma~\ref{lem:str_imply_weak}.

\end{proof}

\subsection{Remarks about Architectures of MGNAMs}\label{sec:mono_remark}

\begin{remark}
Additional consideration should be given to the case in which there is a mixture of strong and weak pairwise interactions. Suppose $f$ is strongly monotonic with respect to $x_{\delta}$ over $x_{\eta}$, we group them together as $\*x_{\beta}$. Consider the case for $\*x = (\*x_{\beta},x_{\gamma})$, where $f$ is weakly monotonic with respect to $x_{\eta}$ in $\*x_{\beta}$ over $x_{\gamma}$, then we shouldn't separate $\*x_{\beta}$ and $x_{\gamma}$ as there will be some unfair comparisons. More specifically, $\*x_{\beta}$ can take different choices of values, whereas $x_{\gamma}$ is only one-dimensional.  It follows that if there is strong pairwise monotonicity involved, then all pairwise related features should be grouped together. As a concrete example, let $x_{\delta}$ count the number of past dues with 60+ days within one year, $x_{\eta}$ count the number of past dues with 30-59 days within one year, and $x_{\gamma}$ counts the number of past dues with 30-59 days one year ago. As there is strong pairwise monotonicity between $x_{\delta}$ and $x_{\eta}$, we group them together. Suppose now we take the additive form that
\begin{align}
f(\*x) = g(x_{\delta},x_{\eta}) + h(x_{\gamma}),
\end{align}
for some differentiable functions $g$ and $h$. The weak pairwise monotonicity between $x_{\eta}$ and $x_{\gamma}$ would requires that 
\begin{align}
\frac{\partial}{\partial x_{\eta}} g(x_{\delta},y) \geq \frac{\partial}{\partial x_{\gamma}} h(y), \ \forall y,x_{\delta}.
\end{align}
Note $g$ is impacted by values of $x_{\delta}$ and $h$ is not. This is inconsistent with our intention and is an unfair comparison. 
\end{remark}

\subsection{Empirical Examples}\label{sec:empirical_example}

For all our experiments, the dataset is randomly partitioned into $75\%$ training and $25\%$ test sets. All neural networks contain 1 hidden layer with 2 units, logistic activation, and no regulation. We monitored the model selection cross-validation empirical results and observe no obvious improvement in accuracy based on in/out-of-sample results, except for the healthcare dataset due to insufficient data. We tested models with up to 20 hidden units and two hidden layers. No obvious improvement in accuracy was observed. Additionally, we checked existing literature or public codes online; our accuracy is comparable. For accuracy, we check classification errors and the area under the curve (AUC). The code is built and modified based on \cite{Vahe2023}.

\subsubsection{Finance - Credit Scoring}

A popularly used dataset is the Kaggle credit score dataset \footnote{https://www.kaggle.com/c/GiveMeSomeCredit/overview}. For simplicity, data with missing variables are removed. Past dues greater than four times are truncated. Further careful data cleanings could potentially improve model performance but are not the primary concern of this paper. Among the total 120969 observations, 8,357 (6.95$\%$) relate to the cardholders with default payments. This shows that the data are seriously imbalanced.  The dataset contains 10 features as explanatory variables:
\begin{itemize}
\item $x_1$: Total balance on credit cards and personal lines of credit except for real estate and no installment debt such as car loans divided by the sum of credit limits
\item $x_2$: Age of borrower in years
\item $x_3$: Number of times borrower has been 30-59 days past due but no worse in the last 2 years
\item $x_4$: Monthly debt payments, alimony, and living costs divided by monthly gross income 
\item $x_5$: Monthly income
\item $x_6$: Number of open loans (installments such as car loan or mortgage) and lines of credit (e.g., credit cards) 
\item $x_7$: Number of times borrower has been 90 days or more past due 
\item $x_8$: Number of mortgage and real estate loans including home equity lines of credit 
\item $x_9$: Number of times borrower has been 60-89 days past due but no worse in the last 2 years 
\item $x_{10}$: Number of dependents in the family, excluding themselves (spouse, children, etc.) 
\item $y$: Client's behavior; 1 = Person experienced 90 days past due delinquency or worse
\end{itemize}
The feature age is further excluded to avoid potential discrimination. 

\subsubsection{Criminal Justice - COMPAS}

COMPAS is a proprietary score developed to predict recidivism risk, which is used to guide bail, sentencing, and parole decisions. It has been criticized for racial bias\cite{angwin2016machine,dressel2018accuracy,tan2018distill}. A report published by ProPublica in 2016 provided recidivism data for defendants in Broward County, Florida \cite{Propublica2016COMPAS}. We focus on the simplified cleaned dataset provided in \cite{dressel2018accuracy}. Three thousand and fifty-one ($45\%$) of the 7,214 observations committed a crime within two years. This study uses a binary response variable, recidivism, as the response variable. The dataset here contains nine features, which were selected after some feature selection was conducted. 
\begin{itemize}
\item $x_1$: Races include White (Caucasian), Black (African American), Hispanic, Asian, Native American, and Others
\item $x_2$: Sex, male or female
\item $x_3$: Age
\item $x_4$: Total number of juvenile felony criminal charges
\item $x_5$: Total number of juvenile misdemeanor criminal charges
\item $x_6$: Total number of non-juvenile criminal charges
\item $x_7$: A numeric value corresponding to the specific criminal charge
\item $x_8$: An indicator of the degree of the charge: misdemeanor or felony
\item $x_9$: An numeric value between 1 and 10 corresponds to the recidivism risk score generated by COMPAS software (a small number corresponds to a low risk, and a larger number corresponds to a high risk)
\item $y$: Whether the defendant recidivated two years after the previous charge
\end{itemize}
To avoid discrimination, we further exclude races and sexes. The COMPAS score is also excluded as it is not the focus of this study and is correlated with other features, making its interpretation more difficult. As there are too few samples, we truncate the number of juveniles exceeding three. Otherwise, if monotonicity is requested, NN functions will become flat, which is not useful.

\subsubsection{Healthcare - Heart Failure Clinical Records}

This dataset \cite{ahmad2017survival,chicco2020machine} focuses on the prediction of patients' survival with heart failure in 2015. In total, there are 299 patients. The concept of fairness may be relevant here, for example, if doctors are required to decide which operation should be performed first based on the patient's condition. Death is used as the response variable in this study. This dataset contains a total of 12 features. 
\begin{itemize}
    \item $x_1$: Age
    \item $x_2$: Anaemia, a decrease of red blood cells or hemoglobin
    \item $x_3$: High blood pressure, if the patient has hypertension
    \item $x_4$: Creatinine phosphokinase
    \item $x_5$: If the patient has diabetes 
    \item $x_6$: Ejection fraction, percentage of blood leaving the heart at each contraction
    \item $x_7$: Platelets in the blood (kiloplatelets/mL)
    \item $x_8$: Sex
    \item $x_9$: Level of serum creatinine in the blood (mg/dL)
    \item $x_{10}$: Level of serum sodium in the blood (mEq/L)
    \item $x_{11}$: If the patient smokes or not
    \item $x_{12}$: Time, follow-up period (days)
    \item $y$: Death event, if the patient deceased during the follow-up period
\end{itemize}

\end{document}